\documentclass{article}
\usepackage{iclr2027_conference,times}
\usepackage[T1]{fontenc} %

\usepackage{amsmath,amsfonts,bm}

\def\eqref#1{equation~\ref{#1}}

\def\1{\bm{1}}

\DeclareMathAlphabet{\mathsfit}{\encodingdefault}{\sfdefault}{m}{sl}
\SetMathAlphabet{\mathsfit}{bold}{\encodingdefault}{\sfdefault}{bx}{n}

\usepackage{amsthm}
\usepackage{booktabs}
\usepackage{multirow}
\usepackage{caption}
\usepackage{subcaption}
\usepackage{graphicx}
\usepackage{tabularx}
\usepackage{longtable}
\usepackage{float}
\usepackage{flafter}
\usepackage{placeins}
\usepackage{wrapfig}
\usepackage{xcolor}
\usepackage{colortbl}
\definecolor{oursrow}{RGB}{235,243,250}
\usepackage{hyperref}
\usepackage{url}

\newcommand{\method}{\textsc{SharpDraft}} %

\newcolumntype{Y}{>{\raggedright\arraybackslash}X}
\newtheorem{proposition}{Proposition}
\DeclareMathOperator{\logit}{logit}

\title{\method{}: Accelerating\\
Long-Context Speculative Decoding\\
with Cardinality-Aware Query Scaling}
\author{%
Jeonghoon Park\thanks{Equal contribution.}\quad
Seongwoon Jo\footnotemark[1]\quad
Jongwon Lee\footnotemark[1]\quad
Taesik Gong\thanks{Corresponding author.}\\
{\normalfont UNIST}\\
{\normalfont\texttt{\{hoonably, whtjddns1234, belluno, taesik.gong\}@unist.ac.kr}}}
\iclrfinalcopy %

\begin{document}
\maketitle
\fancyhead{} %
\renewcommand{\headrulewidth}{0pt}
\vspace{-0.12in} %

\begin{abstract}
Long-form reasoning makes inference expensive, and speculative decoding mitigates this cost by verifying multiple draft tokens in parallel.
Its speedup, however, can fade as context grows and draft acceptance declines.
We focus on attention-mass dilution: as softmax normalizes over more visible Keys, the mass concentrated on the highest-scoring Keys can decrease.
We introduce \method{}, a training-free method that counteracts this effect through cardinality-aware Query scaling, without the computational overhead of online adaptation.
Under explicit assumptions, we derive an exact top-$k$ mass correction and deploy a closed-form fixed-slope approximation.
Across AIME-26, GPQA-Diamond, and LongGenBench Diary, \method{} achieves $2.59$--$3.19\times$ geometric-mean end-to-end speedups over target-only autoregressive decoding when applied to DFlash, PARD, and EAGLE 3.1.
With DFlash, it improves decoding speed and outperforms full-parameter and LoRA-based online adaptation in end-to-end speedup, while matching the unmodified drafter's reported peak allocated GPU memory.
\end{abstract}

\begin{figure}[!t]
 \centering
 \begin{subfigure}[t]{0.31\linewidth}
  \centering
  \includegraphics[width=\linewidth]{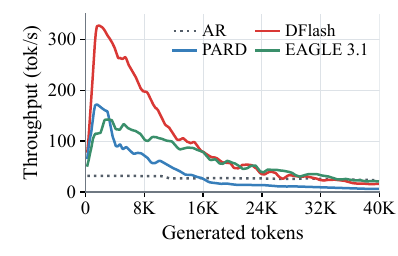}
  \caption{Decode throughput}
 \end{subfigure}\hspace{0.04\linewidth}%
 \begin{subfigure}[t]{0.31\linewidth}
  \centering
  \includegraphics[width=\linewidth]{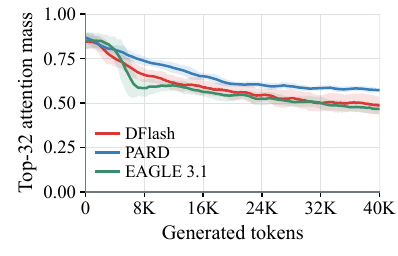}
  \caption{Top-32 attention mass}
 \end{subfigure}
 \caption{Long-generation behavior of Qwen3-8B on LongGenBench Diary.
 (a) Decode throughput of different methods in tokens per second.
 (b) Top-32 attention mass of different methods.}
 \label{fig:cardinality-motivation}
\end{figure}

\section{Introduction}
\label{sec:introduction}

Recent advances in Large Language Models (LLMs) have expanded their use from short question answering to long-form writing, long-horizon reasoning, agentic workflows, and multi-turn dialogue~\citep{longwriter,longreason,longcot,agentlongbench,mtbench101}.
These workloads either produce long outputs within a single request or repeatedly append new interactions to an already growing history.
Autoregressive decoding remains slow in this setting because it generates only one token per sequential decoding step.
Speculative decoding reduces the number of sequential target-model passes by using a lightweight draft model to propose several tokens for parallel verification~\citep{leviathan2023speculative,chen2023speculative}.
A successful verification can accept multiple draft tokens at once rather than producing a single token as in autoregressive decoding.
The resulting speedup depends in part on the mean accepted length (MAL), reported here as the number of output tokens emitted per target verification.

The benefit of speculative decoding does not remain constant throughout a long generation.
MAL can decrease as generation proceeds.
This increases the number of target-model verification steps required per generated token.
DFlash's Qwen3-8B draft model is trained with a maximum sequence length of 3,072 tokens~\citep{dflash}.
Figure~\ref{fig:cardinality-motivation}(a) illustrates this behavior on LongGenBench Diary: DFlash, PARD, and EAGLE 3.1 achieve high throughput relative to AR early in generation, but their advantage declines as generation continues.
Throughout this paper, EAGLE 3.1 refers to the post-norm EAGLE-3 configuration incorporating the normalization changes introduced in Attention Drift~\citep{attentiondrift} and described in the EAGLE 3.1 release~\citep{eagle31}.

OSD and OnlineSpec use target feedback to adapt the draft model during inference and improve its agreement with the target~\citep{osd,onlinespec}.
We ask whether growing context is also accompanied by a systematic change in the draft model's attention.
Softmax attention can become more diffuse as the number of visible Keys grows~\citep{softmaxnotenough}.
Figure~\ref{fig:cardinality-motivation}(b) shows a corresponding decline in the average attention mass assigned to the 32 highest-scoring Keys across these draft models.
We call this change \emph{attention-mass dilution} and investigate whether it contributes to MAL degradation and whether directly correcting it can preserve draft acceptance. %

In this paper, we introduce \method{}, a training-free method that counteracts attention-mass dilution through cardinality-aware Query scaling.
At each draft step, \method{} scales each draft Query according to its visible-Key cardinality without changing the draft-model parameters. %
Under explicit assumptions on the attention-score partitions, we derive an exact correction for top-$k$ attention mass.
\method{} uses a closed-form fixed-slope approximation that requires only a scalar computation and Query rescaling.
The cached Keys and Values remain unchanged.

We evaluate \method{} with DFlash, PARD, and EAGLE 3.1 on long-generation workloads.
Across AIME-26, GPQA-Diamond, and LongGenBench Diary, these systems achieve geometric-mean end-to-end speedups of $2.59$--$3.19\times$ over target-only AR.
In the DFlash evaluation, \method{} improves on the unmodified drafter while matching its reported peak allocated GPU memory, and outperforms full-parameter and LoRA-based online adaptation in end-to-end speedup.

\section{Related Work}
\label{sec:related-work}

\subsection{Speculative Decoding and Online Draft Adaptation}

Speculative decoding reduces the latency of autoregressive LLM inference~\citep{leviathan2023speculative,chen2023speculative}.
A lightweight draft model proposes multiple tokens, which the target model verifies in parallel.
The resulting speedup depends in part on how many draft tokens are accepted.
Recent advances, including EAGLE-3~\citep{eagle3} and DFlash~\citep{dflash}, improve draft quality by using target-model features to guide token prediction.
However, draft models trained on short sequences may not generalize well to longer contexts.
LongSpec addresses this mismatch in part by introducing novel position indices for short-sequence draft training~\citep{longspec}.
Test-Time Speculation (TTS) reports that acceptance in EAGLE-3 and DFlash declines as generation proceeds, reducing their decoding speedup~\citep{tts}.

Online draft adaptation addresses inference-time mismatch by learning from target feedback collected during serving.
Online Speculative Decoding (OSD) continually distills draft models on observed queries to adapt to the serving distribution~\citep{osd}.
OnlineSpec formulates draft--target interaction as online learning and develops algorithms for updating draft models from verification feedback~\citep{onlinespec}.
TTS adapts the draft model during generation using target supervision from successive verification rounds~\citep{tts}.
It uses strided updates and asynchronous execution to reduce the impact of training on decoding latency.
Our work also addresses declining draft acceptance during long generations, but focuses on the dilution of attention mass as the number of attended Keys grows.
Rather than updating the draft model weights, \method{} mitigates this effect by scaling Queries according to the current Key count.

\subsection{Attention Normalization and Length Generalization}

Attention normalization can affect how models generalize to longer contexts.
LM-Infinite shows that increasing the number of attended tokens can push attention entropy beyond its training range when attention logits remain bounded~\citep{lminfinite}.
\citet{softmaxnotenough} establish conditions under which softmax attention disperses as the number of attended tokens grows.
They propose adjusting the inference-time temperature based on attention entropy to mitigate this effect.
\citet{normalizationlimits} analyze how softmax scaling and the geometry of token representations constrain selective attention.
Scalable-Softmax mitigates attention dilution by scaling Queries logarithmically with the number of attended tokens~\citep{scalablesoftmax}.
It considers both learning the scaling parameters during pretraining and replacing softmax after pretraining.

Unlike these studies, our work addresses declining draft acceptance in speculative decoding during long generations. To that end, we analyze attention normalization to distinguish the effect of Key count from changes in the score contrast between the top-$k$ Keys and the remaining Keys.
This analysis yields a Query scaling rule that preserves a reference top-$k$ attention mass under stated assumptions.
\method{} uses a lightweight approximation of this rule, which we evaluate in terms of draft acceptance and end-to-end decoding latency.

\section{Cardinality-Aware Query Scaling}
\label{sec:method}

\method{} scales draft queries according to the number of visible keys. Our primary evaluations use a closed-form fixed-slope approximation, derived below by separating the effect of key count from attention-score contrast.

\subsection{Motivation: Attention Dilution under Growing Key Counts}
\label{sec:formulation}

Full-context drafting exposes queries to more keys as generation proceeds; Figure~\ref{fig:cardinality-motivation} shows declining top-32 attention mass for Qwen3-8B. Adding lower-scoring keys enlarges the softmax denominator and reduces an existing top-$k$ group's mass even when all existing scores stay fixed.

For one attention head of dimension $d$, fix a query $q_t$ and keys $k_{t,i}$ after normalization and positional transformation. The unscaled scores and attention weights at query scale $\beta>0$ are
\begin{equation}
 z_{t,i}=\frac{q_t^\top k_{t,i}}{\sqrt d},
 \qquad a_{t,i}(\beta)=
 \frac{e^{\beta z_{t,i}}}{\sum_{j=1}^{N_t}e^{\beta z_{t,j}}},
 \label{eq:attention}
\end{equation}
where $\beta=1$ is the checkpoint scale. For a fixed positive integer $k$ and $N_t>k$ keys visible under the actual mask, let $S_t$ be the top-$k$ score set, $B_t$ its complement, and $C_t=N_t-k=|B_t|$. To separate score contrast from group size, define
\begin{equation}
 F_t(\beta)=
 \log\left(\frac{1}{k}\sum_{i\in S_t}e^{\beta z_{t,i}}\right)
 -\log\left(\frac{1}{C_t}\sum_{j\in B_t}e^{\beta z_{t,j}}\right),
 \label{eq:method-current-contrast}
\end{equation}
The top-$k$ mass $p_t(\beta)=\sum_{i\in S_t}a_{t,i}(\beta)$ then obeys the exact identity
\begin{equation}
 \logit p_t(\beta)=F_t(\beta)-\log\frac{C_t}{k}.
 \label{eq:method-partition-identity}
\end{equation}
Here $F_t$ measures the score contrast normalized by each partition's size, while $\log(C_t/k)$ gives the explicit competitor-count term. At fixed contrast and $k$, doubling $C_t$ halves the attention odds and reduces top-$k$ mass.

During generation, content, positions, and top-$k$ membership also change, so this identity does not isolate cardinality as the cause of Figure~\ref{fig:cardinality-motivation}'s trend. Top-$k$ membership alone does not establish task relevance. For fixed scores, positive scaling preserves their ordering and, under strict top-$k$ separation, increases $F_t$ (Appendix~\ref{app:derivations}), motivating a correction for the count term.

\subsection{Cardinality-Aware Query Scaling}
\label{sec:rule}
\label{sec:reference}

Choose a fixed reference count $N_0>k$ with $C_{\mathrm{ref}}=N_0-k$. For the derivation and numerical-inverse variant, define the reference contrast $F_{\mathrm{ref}}(\beta)=\mathbb{E}_{x\in\mathcal C_{\mathrm{ref}}}[F_x(\beta)]$, where $\mathcal C_{\mathrm{ref}}$ is a set of reference queries disjoint from evaluation. The reference mass is defined by $\logit\rho_{\mathrm{ref}}=F_{\mathrm{ref}}(1)-\log(C_{\mathrm{ref}}/k)$. For $C_t\ge C_{\mathrm{ref}}$ and inactive upper clipping, reference matching requires
\begin{equation}
 F_{\mathrm{ref}}(\beta_t)-F_{\mathrm{ref}}(1)
 =\log\frac{C_t}{C_{\mathrm{ref}}}.
 \label{eq:contrast-correction}
\end{equation}
Thus, increased reference contrast offsets the log-odds loss from more competitors.

A first-order Taylor expansion around $\beta=1$ gives $F_{\mathrm{ref}}(\beta)\approx F_{\mathrm{ref}}(1)+\Delta_{\mathrm{ref}}(\beta-1)$, with $\Delta_{\mathrm{ref}}=F'_{\mathrm{ref}}(1)>0$. Our deployed fixed-slope rule replaces this derivative by a fixed $s>0$ and caps the scale at $\beta_{\max}\ge1$:
\begin{equation}
 r_t=\max\left\{\frac{N_t-k}{N_0-k},1\right\},
 \qquad
 \widehat\beta_t=\min\left\{\beta_{\max},1+\frac{\log r_t}{s}\right\}.
 \label{eq:fixed-slope-rule}
\end{equation}
We apply $q_t\leftarrow\widehat\beta_t q_t$ using $s=5$, $k=32$, and $\beta_{\max}=e^{0.5}$, without constructing an empirical reference curve. The slope is a fixed setting rather than a measured derivative. Likewise, $N_0$ is a fixed experimental proxy; the values and their interpretation are given in Appendix~\ref{app:reference-selection}. Below the reference count, $\widehat\beta_t=1$.

For $C_t\ge C_{\mathrm{ref}}$, the reference inverse preserves $\rho_{\mathrm{ref}}$ when current and reference contrasts match throughout the application range and upper clipping is inactive (Appendix~\ref{app:inverse-rule}). For the fixed-slope rule, mass error also depends on the slope choice and the Taylor approximation error (Appendix~\ref{app:approximation-scope}). This attention-mass analysis does not guarantee draft acceptance or latency.

\subsection{Inference Integration}
\label{sec:inference}

We integrate \method{} into DFlash~\citep{dflash} by rescaling queries after Q-normalization, sharing the scale across all five draft layers in our primary evaluation. The scaling affects query scores against both context and current-block keys. The target model and verification procedure are unchanged, as is the construction of persistent context keys and values from target-model features. Scale changes therefore require no rebuilding of cached context keys and values. Appendix~\ref{app:query-rescaling} specifies the BF16/FP32 arithmetic.

At each drafting step, $N_t$ includes visible persistent-context and current-draft-block keys, excluding masked and padded positions. Our full-attention implementation uses persistent-context length plus draft-block size and retains the count across turns when context is retained. With a fixed sliding window, the count and scale saturate. The parameter $k$ defines the analytical partition and does not restrict which keys receive attention. The deployed rule requires no online top-$k$ selection or reference-curve evaluation: it computes a scalar scale from the count and rescales the queries. Both operations are included in E2E latency, with no additional model forward pass, backward pass, or optimizer state.

\FloatBarrier
\section{Experiments}
\label{sec:experiments}

We evaluate how \method{} affects decoding latency, memory use, and draft acceptance across models and generation lengths.

\paragraph{Evaluation setup.}
Our primary experiments pair Qwen3-4B, Qwen3-8B, and Llama-3.1-8B with their corresponding DFlash draft models~\citep{dflash}.
We compare the unmodified draft model (Frozen), Full Adaptation, LoRA, and \method{}, using target-only autoregressive decoding (AR) as the latency reference.
Full Adaptation updates all parameters of the draft model using target feedback during generation.
LoRA~\citep{hu2022lora} updates only the low-rank adapter parameters, with rank 16 unless otherwise specified.
Both adaptation baselines use a forward-KL distillation objective to align the draft model's token distributions with those of the target~\citep{osd,distillspec}.
They use target distributions obtained during verification and schedule periodic updates during generation, a setting similar to that of Test-Time Speculation (TTS)~\citep{tts}.
\method{} applies Equation~\ref{eq:fixed-slope-rule} without updating model parameters.
We measure end-to-end (E2E) speedup, peak allocated memory, online update cost, and mean accepted length (MAL), defined as emitted tokens per target verification.
Hardware, execution backends, and measurement protocols are specified in Appendix~\ref{app:reproducibility}.

\subsection{End-to-End Efficiency}
\label{sec:main-results}

To measure the net benefit of calibration and online adaptation, we compare total generation time, including their inference-time costs.
Table~\ref{tab:main-results} covers 30 AIME-26 questions from MathArena~\citep{matharena2026}, all 198 GPQA-Diamond questions~\citep{gpqa}, and 100 Daily Diary prompts from LongGenBench's long split~\citep{longgenbench}, abbreviated as LGB(Diary).
Speedup is total AR time divided by total method time on the same prompts.
Overall is the geometric mean across benchmarks.

\newcommand{\DiaryInterimScope}{}

\begin{table}[H]
 \centering
 \caption{End-to-end speedup and adaptation overhead on long-generation workloads. Speedups are relative to target-only AR; Overall is the geometric mean across the three benchmarks. Peak memory and update cost are measured on AIME-26 and GPQA-Diamond only.}
 \label{tab:main-results}
 \resizebox{\linewidth}{!}{%
 \begin{tabular}{@{}llccccccc}
  \toprule
  \multirow{2}{*}{Model} & \multirow{2}{*}{Method}
  & \multirow{2}{*}{\shortstack{Trainable\\Params $\downarrow$}}
  & \multirow{2}{*}{\shortstack{Peak\\Mem. $\downarrow$}}
  & \multirow{2}{*}{\shortstack{Online Update\\Cost (ms/update) $\downarrow$}}
  & \multicolumn{4}{c}{Speedup $\uparrow$} \\
  \cmidrule(lr){6-9}
  & & & & & AIME-26 & GPQA-D & LGB(Diary) & Overall \\
  \midrule
  \multirow{7}{*}{Qwen3-4B}
  & Target-only AR & $0$ & $12.54\,\mathrm{GiB}$ & $0$ & $1.000\times$ & $1.000\times$ & $1.000\times$ & $1.000\times$ \\
  & DFlash (Frozen) & $0$ & $16.68\,\mathrm{GiB}$ & $0$ & $1.267\times$ & $1.598\times$ & $1.637\times$ & $1.491\times$ \\
  & $+$ Full Adaptation & $537.43\mathrm{M}$ & $24.46\,\mathrm{GiB}$ & $103.14$ & $1.368\times$ & $1.981\times$ & $2.478\times$ & $1.886\times$ \\
  & $+$ LoRA ($r=32$) & $9.67\mathrm{M}$ & $19.95\,\mathrm{GiB}$ & $51.26$ & $1.434\times$ & $1.952\times$ & $2.568\times$ & $1.930\times$ \\
  & $+$ LoRA ($r=16$) & $4.83\mathrm{M}$ & $19.84\,\mathrm{GiB}$ & $49.91$ & $1.589\times$ & $2.183\times$ & $2.865\times$ & $2.150\times$ \\
  & $+$ LoRA ($r=8$) & $2.42\mathrm{M}$ & $19.80\,\mathrm{GiB}$ & $50.13$ & $1.667\times$ & $2.267\times$ & $3.050\times$ & $2.259\times$ \\
  \rowcolor{oursrow}
  \cellcolor{white} & $+$ \method{} & $0$ & $16.68\,\mathrm{GiB}$ & $0$ & $\mathbf{1.973\times}$ & $\mathbf{2.666\times}$ & $\mathbf{3.316\times}$ & $\mathbf{2.593\times}$ \\
  \midrule
  \multirow{7}{*}{Qwen3-8B}
  & Target-only AR & $0$ & $20.26\,\mathrm{GiB}$ & $0$ & $1.000\times$ & $1.000\times$ & $1.000\times$ & $1.000\times$ \\
  & DFlash (Frozen) & $0$ & $26.87\,\mathrm{GiB}$ & $0$ & $1.319\times$ & $1.392\times$ & $1.798\times$ & $1.489\times$ \\
  & $+$ Full Adaptation & $1.049\mathrm{B}$ & $41.29\,\mathrm{GiB}$ & $199.21$ & $1.148\times$ & $1.317\times$ & $2.112\times$ & $1.472\times$ \\
  & $+$ LoRA ($r=32$) & $12.91\mathrm{M}$ & $31.95\,\mathrm{GiB}$ & $61.65$ & $1.532\times$ & $1.815\times$ & $2.993\times$ & $2.027\times$ \\
  & $+$ LoRA ($r=16$) & $6.46\mathrm{M}$ & $31.80\,\mathrm{GiB}$ & $61.76$ & $1.613\times$ & $1.887\times$ & $3.125\times$ & $2.119\times$ \\
  & $+$ LoRA ($r=8$) & $3.23\mathrm{M}$ & $31.63\,\mathrm{GiB}$ & $61.76$ & $1.643\times$ & $1.884\times$ & $3.184\times$ & $2.144\times$ \\
  \rowcolor{oursrow}
  \cellcolor{white} & $+$ \method{} & $0$ & $26.87\,\mathrm{GiB}$ & $0$ & $\mathbf{2.130\times}$ & $\mathbf{2.316\times}$ & $\mathbf{3.882\times}$ & $\mathbf{2.675\times}$ \\
  \midrule
  \multirow{7}{*}{Llama-3.1-8B}
  & Target-only AR & $0$ & $19.16\,\mathrm{GiB}$ & $0$ & $1.000\times$ & $1.000\times$ & $1.000\times$ & $1.000\times$ \\
  & DFlash (Frozen) & $0$ & $25.62\,\mathrm{GiB}$ & $0$ & $2.170\times$ & $2.045\times$ & $3.183\times$ & $2.417\times$ \\
  & $+$ Full Adaptation & $1.049\mathrm{B}$ & $40.15\,\mathrm{GiB}$ & $246.19$ & $1.705\times$ & $1.596\times$ & $2.211\times$ & $1.819\times$ \\
  & $+$ LoRA ($r=32$) & $12.91\mathrm{M}$ & $30.61\,\mathrm{GiB}$ & $102.06$ & $2.044\times$ & $1.998\times$ & $2.675\times$ & $2.219\times$ \\
  & $+$ LoRA ($r=16$) & $6.46\mathrm{M}$ & $30.59\,\mathrm{GiB}$ & $101.62$ & $2.147\times$ & $2.067\times$ & $2.809\times$ & $2.319\times$ \\
  & $+$ LoRA ($r=8$) & $3.23\mathrm{M}$ & $30.22\,\mathrm{GiB}$ & $101.26$ & $2.175\times$ & $2.100\times$ & $2.854\times$ & $2.353\times$ \\
  \rowcolor{oursrow}
  \cellcolor{white} & $+$ \method{} & $0$ & $25.62\,\mathrm{GiB}$ & $0$ & $\mathbf{2.714\times}$ & $\mathbf{2.593\times}$ & $\mathbf{3.710\times}$ & $\mathbf{2.967\times}$ \\
  \bottomrule
 \end{tabular}%
 }
\end{table}

\method{} achieves the highest E2E speedup on all nine model--benchmark pairs in Table~\ref{tab:main-results}.
Overall speedups are $2.593\times$, $2.675\times$, and $2.967\times$ for Qwen3-4B, Qwen3-8B, and Llama-3.1-8B, respectively.
Full Adaptation improves over Frozen on Qwen3-4B.
For Qwen3-8B and Llama-3.1-8B, however, it takes longer on both AIME-26 and GPQA-Diamond.
All tested LoRA ranks trail \method{} on every model--benchmark pair.

Query scaling also avoids the memory needed to train the draft model.
\method{} matches Frozen's reported peak memory, whereas Full Adaptation adds $7.78$--$14.53$ GiB across the three models.
LoRA reduces this overhead but still incurs additional memory and update costs.
Query scaling is included in the E2E latency and peak-memory measurements.
\method{} requires no optimizer updates.

To examine the training work incurred within a request, Figure~\ref{fig:request-time-breakdown} overlays recorded GPU training durations on mean AIME-26 E2E times, using rank 8 for LoRA.
Full Adaptation spends substantially more GPU time on training than LoRA, while \method{} completes requests fastest on all three models without optimizer updates.
Training can overlap decoding, so the hatched durations measure training work rather than a separate, additive portion of request latency (Appendix~\ref{app:request-time-breakdown}).

\begin{figure}[H]
 \centering
 \includegraphics[width=0.96\linewidth]{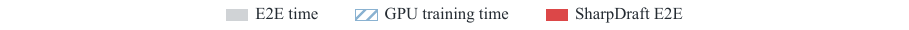}\par
 \captionsetup{font=small,labelfont=normalfont,skip=3pt}
 \captionsetup[subfigure]{font=small,skip=0.5pt}
 \begin{subfigure}[t]{0.42\linewidth}
  \centering
  \includegraphics[width=0.98\linewidth]{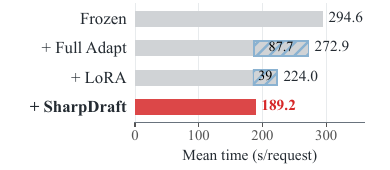}
  \caption{Qwen3-4B}
 \end{subfigure}\hfill
 \begin{subfigure}[t]{0.28\linewidth}
  \centering
  \includegraphics[width=0.98\linewidth]{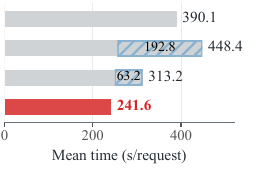}
  \caption{Qwen3-8B}
 \end{subfigure}\hfill
 \begin{subfigure}[t]{0.28\linewidth}
  \centering
  \includegraphics[width=0.98\linewidth]{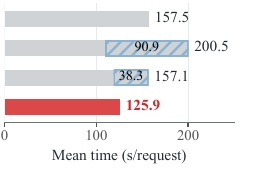}
  \caption{Llama-3.1-8B}
 \end{subfigure}
 \caption{Mean end-to-end latency and recorded GPU training time per AIME-26 request.}
 \label{fig:request-time-breakdown}
\end{figure}

\subsection{Generalization Across Draft Model Families}
\label{sec:pard-extension}

To test whether the correction is useful beyond DFlash, we use Qwen3-8B and apply Query scaling to parallel drafting with PARD~\citep{pard} and autoregressive drafting with EAGLE 3.1~\citep{eagle31,attentiondrift}.
The draft model weights, proposal configuration, and verification procedure remain unchanged within each family.
Table~\ref{tab:pard-results} reports MAL and AR-relative speedup.
Reference and clipping settings for each draft model are given in Appendix~\ref{app:cross-drafter}.

For EAGLE 3.1, \method{} raises MAL from $3.152$ to $3.302$ on AIME-26, $3.659$ to $3.904$ on GPQA-D, and $5.668$ to $5.968$ on Diary.
Whether the MAL gains reduce latency also depends on drafting and verification costs.

\begin{table}[!htbp]
\centering
\caption{MAL and end-to-end speedup across draft model families with Qwen3-8B. Speedups are relative to target-only AR; Overall is their geometric mean across the three benchmarks.}
\label{tab:pard-results}
\fontsize{8}{9.5}\selectfont
\setlength{\tabcolsep}{4pt}
\renewcommand{\arraystretch}{1.08}
\begin{tabular}{@{}lccccccc}
\toprule
\multirow{2.5}{*}{Method}
 & \multicolumn{3}{c}{MAL $\uparrow$}
 & \multicolumn{4}{c}{Speedup $\uparrow$} \\
\cmidrule(lr){2-4}\cmidrule(lr){5-8}
 & AIME-26 & GPQA-D & LGB(Diary) & AIME-26 & GPQA-D & LGB(Diary) & Overall \\
\midrule
Target-only AR & -- & -- & -- & $1.000\times$ & $1.000\times$ & $1.000\times$ & $1.000\times$ \\
PARD & $2.368$ & $2.728$ & $2.252$ & $2.691\times$ & $3.389\times$ & $2.729\times$ & $2.919\times$ \\
$+$ LoRA & $1.637$ & $2.224$ & $1.390$ & $1.015\times$ & $1.567\times$ & $0.923\times$ & $1.137\times$ \\
\cellcolor{oursrow}$+$ \method{} & \cellcolor{oursrow}$\mathbf{2.441}$ & \cellcolor{oursrow}$\mathbf{2.906}$ & \cellcolor{oursrow}$\mathbf{2.424}$ & \cellcolor{oursrow}$\mathbf{2.773\times}$ & \cellcolor{oursrow}$\mathbf{3.939\times}$ & \cellcolor{oursrow}$\mathbf{2.975\times}$ & \cellcolor{oursrow}$\mathbf{3.191\times}$ \\
EAGLE 3.1 & $3.152$ & $3.659$ & $5.668$ & $1.922\times$ & $2.529\times$ & $3.515\times$ & $2.575\times$ \\
$+$ LoRA & $3.086$ & $3.812$ & $4.251$ & $1.725\times$ & $2.168\times$ & $2.549\times$ & $2.120\times$ \\
\cellcolor{oursrow}$+$ \method{} & \cellcolor{oursrow}$\mathbf{3.302}$ & \cellcolor{oursrow}$\mathbf{3.904}$ & \cellcolor{oursrow}$\mathbf{5.968}$ & \cellcolor{oursrow}$\mathbf{2.045\times}$ & \cellcolor{oursrow}$\mathbf{2.624\times}$ & \cellcolor{oursrow}$\mathbf{3.792\times}$ & \cellcolor{oursrow}$\mathbf{2.730\times}$ \\
\bottomrule
\end{tabular}
\end{table}

\subsection{Draft Acceptance in Multi-Turn Generation}
\label{sec:multiturn-mal}

To evaluate acceptance as conversation history grows, we extend the multi-turn setting used in MT-Bench~\citep{mtbench} and speculative-decoding benchmarks~\citep{eagle,specbench} with MATH-500 level-4/5 problems~\citep{lightman2023verify}.
Each turn introduces a new problem after EOS or 5,000 generated tokens, retaining conversation history, KV caches, and adaptation state.
Figure~\ref{fig:multiturn-mal-profiles} averages thirty sessions with matched question orders and covers 32K generated tokens for the three Table~\ref{tab:main-results} models.
To examine longer histories, we additionally evaluate GLM-4.7-Flash with a cumulative generation budget of 100K tokens, using its larger context limit.

\method{} maintains higher late-position MAL than Frozen across all four models.
Full Adaptation and LoRA can achieve higher MAL, particularly for Qwen3-4B, at the cost of online training.
Repetitive continuations are retained, so these curves measure drafting efficiency rather than reasoning quality (Appendix~\ref{app:multiturn-mal}).

\begingroup
\setlength{\intextsep}{6pt}
\begin{figure}[H]
 \centering
 \includegraphics[width=0.96\linewidth]{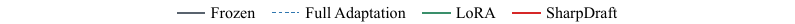}\par
 \vspace{0.1em}
 \captionsetup[subfigure]{font=footnotesize,skip=1pt}
 \input{figures/data/figure4_mal_layout.tex}
 \begin{subfigure}[t]{\FigureFourPanelAWidth}
  \centering
  \includegraphics[width=\linewidth]{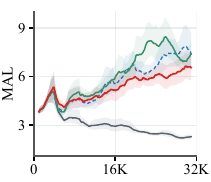}
  \caption{Qwen3-4B}
  \label{fig:mal-qwen3-4b-multiturn}
 \end{subfigure}\hspace{\FigureFourPanelGap}%
 \begin{subfigure}[t]{\FigureFourPanelBWidth}
  \centering
  \includegraphics[width=\linewidth]{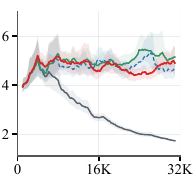}
  \caption{Qwen3-8B}
  \label{fig:mal-qwen3-8b-multiturn}
 \end{subfigure}\hspace{\FigureFourPanelGap}%
 \begin{subfigure}[t]{\FigureFourPanelCWidth}
  \centering
  \includegraphics[width=\linewidth]{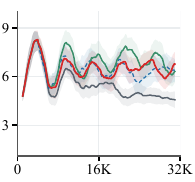}
  \caption{Llama-3.1-8B}
  \label{fig:mal-llama31-8b-multiturn}
 \end{subfigure}\hspace{\FigureFourPanelGap}%
 \begin{subfigure}[t]{\FigureFourPanelDWidth}
  \centering
  \includegraphics[width=\linewidth]{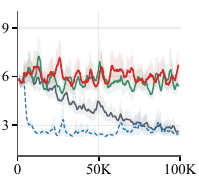}
  \caption{GLM-4.7-Flash}
  \label{fig:mal-glm47-multiturn}
 \end{subfigure}
 \caption{Draft acceptance during multi-turn MATH-500 generation, up to 32K tokens in (a)--(c) and 100K in (d). Lines show mean session MAL; bands indicate pointwise 95\% bootstrap confidence intervals.}
 \label{fig:multiturn-mal-profiles}
\end{figure}
\endgroup

\FloatBarrier

\raggedbottom
\section{Analysis and Ablation Studies}
\label{sec:analysis}

\subsection{Dynamic versus Fixed Scaling}
\label{sec:fixed-ablation}

To isolate the benefit of context-dependent scaling, we compare \method{} with fixed Query scales on Qwen3-8B/DFlash.
The comparison includes Frozen ($\beta=1$) and five constants, $\beta\in\{e^{0.1},e^{0.2},e^{0.3},e^{0.4},e^{0.5}\}$.
All seven settings produce identical output tokens and termination on the 30 AIME-26 problems.

\newcommand{\AblationPaired}{30}
\newcommand{\AblationNativeMAL}{3.182}
\newcommand{\AblationDynamicMAL}{4.063}
\newcommand{\AblationBestFixedMAL}{4.023}
\newcommand{\AblationBestFixedBeta}{e^{0.3}}
\newcommand{\AblationDynamicDelta}{+0.040}
\newcommand{\AblationDynamicPercent}{0.99}
\newcommand{\AblationDynamicWins}{26}
\newcommand{\AblationShortN}{9}
\newcommand{\AblationMediumN}{8}
\newcommand{\AblationLongN}{13}
\newcommand{\AblationPositionEarlyN}{30}
\newcommand{\AblationPositionMiddleN}{21}
\newcommand{\AblationPositionLateN}{13}
\newcommand{\AblationShortFixedDelta}{+0.007}
\newcommand{\AblationMediumFixedDelta}{+0.023}
\newcommand{\AblationLongFixedDelta}{+0.018}

Table~\ref{tab:fixed-dynamic-length} reports whole-request MAL, weighting problems equally and grouping them by final output length.
The best fixed scale is the one with the highest mean MAL among the five tested values, selected separately overall and within each length group (Appendix~\ref{app:fixed-dynamic-live}).

\begingroup
\setlength{\intextsep}{6pt}
\begin{table}[H]
 \centering
 \fontsize{8}{9.5}\selectfont
 \caption{Fixed versus dynamic Query scaling on Qwen3-8B/DFlash using 30 output-matched AIME-26 problems. Values are mean whole-request MAL, grouped by final output length. Parentheses show gains over Frozen.}
 \label{tab:fixed-dynamic-length}
 \setlength{\tabcolsep}{4pt}
 \renewcommand{\arraystretch}{1.12}
 \begin{tabular}{lcccc}
 \toprule
 \multirow[c]{2.5}{*}{Method} & \multicolumn{3}{c}{Final output length} & All problems \\
 \cmidrule(lr){2-4}
  & 0--10K & 10--20K & 20--32K & Overall MAL $\uparrow$ \\
 \midrule
 Frozen ($\beta=1$) & 4.092 & 3.113 & 2.596 & 3.182 \\
 Fixed $\beta=e^{0.1}$ & 4.273 {\scriptsize\textcolor{gray}{(+0.181)}} & 3.524 {\scriptsize\textcolor{gray}{(+0.411)}} & 3.318 {\scriptsize\textcolor{gray}{(+0.723)}} & 3.660 {\scriptsize\textcolor{gray}{(+0.477)}} \\
 Fixed $\beta=e^{0.2}$ & 4.305 {\scriptsize\textcolor{gray}{(+0.214)}} & 3.726 {\scriptsize\textcolor{gray}{(+0.613)}} & 3.785 {\scriptsize\textcolor{gray}{(+1.189)}} & 3.925 {\scriptsize\textcolor{gray}{(+0.743)}} \\
 Fixed $\beta=e^{0.3}$ & 4.284 {\scriptsize\textcolor{gray}{(+0.193)}} & 3.784 {\scriptsize\textcolor{gray}{(+0.671)}} & 3.989 {\scriptsize\textcolor{gray}{(+1.393)}} & 4.023 {\scriptsize\textcolor{gray}{(+0.841)}} \\
 Fixed $\beta=e^{0.4}$ & 4.235 {\scriptsize\textcolor{gray}{(+0.143)}} & 3.756 {\scriptsize\textcolor{gray}{(+0.643)}} & 4.028 {\scriptsize\textcolor{gray}{(+1.433)}} & 4.018 {\scriptsize\textcolor{gray}{(+0.835)}} \\
 Fixed $\beta=e^{0.5}$ & 4.166 {\scriptsize\textcolor{gray}{(+0.074)}} & 3.702 {\scriptsize\textcolor{gray}{(+0.588)}} & 4.005 {\scriptsize\textcolor{gray}{(+1.410)}} & 3.973 {\scriptsize\textcolor{gray}{(+0.790)}} \\
 \rowcolor{oursrow}
 \method{} (dynamic) & \textbf{4.313} {\scriptsize\textcolor{gray}{(+0.221)}} & \textbf{3.808} {\scriptsize\textcolor{gray}{(+0.694)}} & \textbf{4.047} {\scriptsize\textcolor{gray}{(+1.451)}} & \textbf{4.063} {\scriptsize\textcolor{gray}{(+0.880)}} \\
 \bottomrule
 \end{tabular}
\end{table}

\endgroup

Fixed scaling captures most of the gain over Frozen.
The best fixed scale ($\beta=\AblationBestFixedBeta$) reaches a mean MAL of \AblationBestFixedMAL{}, compared with \AblationNativeMAL{} for Frozen and \AblationDynamicMAL{} for \method{}.
The additional gain from dynamic scaling is modest ($\AblationDynamicPercent\%$), but it improves request MAL on \AblationDynamicWins{} of \AblationPaired{} problems and has the highest mean in each length group.
The preferred fixed scale also changes with output length, from $e^{0.2}$ for short outputs to $e^{0.4}$ for long outputs.
A single fixed setting must therefore compromise between these length regimes.

\newcommand{\AimeAttentionProblems}{30}
\newcommand{\AimeAttentionStates}{416}
\newcommand{\AimeAttentionMeasurements}{2,912}
\newcommand{\AimeAttentionChecks}{420}
\newcommand{\AimeAttentionSupport}{30/29/26/15/4}
\newcommand{\AimeAttentionLateN}{4}
\newcommand{\AimeAttentionDynamicEarlyMin}{0.774}
\newcommand{\AimeAttentionDynamicEarlyMax}{0.778}
\newcommand{\AimeAttentionDynamicLate}{0.727}
\newcommand{\AimeAttentionFrozenEarly}{0.774}
\newcommand{\AimeAttentionFrozenLate}{0.414}
\newcommand{\AimeAttentionFixedEarlyMin}{0.825}
\newcommand{\AimeAttentionFixedEarlyMax}{0.953}
\newcommand{\AimeAttentionFixedLateMin}{0.490}
\newcommand{\AimeAttentionFixedLateMax}{0.760}

\newcommand{\AimeCumulativeSupport}{30/26/15/9/4}
\newcommand{\AimeCumulativeLateN}{4}

To see whether this trade-off also occurs within a generation, we track attention concentration and cumulative acceptance on the same cohort.
Fixed scales raise top-32 attention mass on matched prefixes, but the mass still declines as context grows.
\method{} stays near its short-context level through 16K before declining at 32K (Figure~\ref{fig:dynamic-ablation-mass}).
The live acceptance profiles show a corresponding shift toward stronger fixed scales as generation proceeds, whereas \method{} has the highest mean cumulative MAL from 4K onward (Figure~\ref{fig:dynamic-ablation-cumulative-mal}).
Later positions average only the problems that reach them, as detailed in Appendices~\ref{app:attention-sweep} and~\ref{app:cumulative-mal}.

\begingroup
\setlength{\intextsep}{6pt}
\begin{figure}[H]
 \centering
 \includegraphics[width=0.66\linewidth]{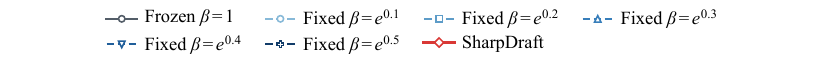}\par
 \vspace{-\parskip}\nointerlineskip
 \begin{subfigure}[t]{0.31\linewidth}
  \centering
  \includegraphics[width=\linewidth]{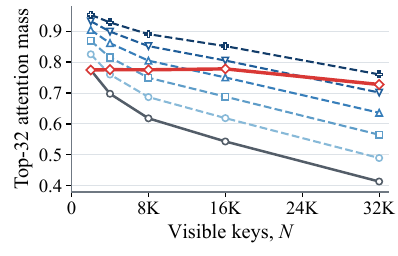}
  \caption{Attention concentration}
  \label{fig:dynamic-ablation-mass}
 \end{subfigure}\hspace{0.04\linewidth}%
 \begin{subfigure}[t]{0.31\linewidth}
  \centering
  \includegraphics[width=\linewidth]{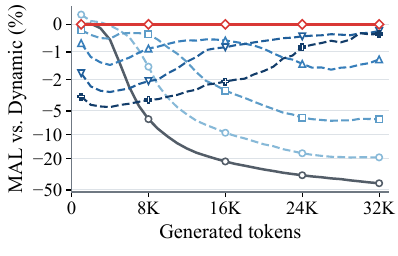}
  \caption{MAL relative to Dynamic}
  \label{fig:dynamic-ablation-cumulative-mal}
 \end{subfigure}
 \caption{Fixed and dynamic scaling during AIME-26 generation with Qwen3-8B. (a) Top-32 attention mass on matched prefixes. (b) Mean cumulative MAL relative to \method{}.}
 \label{fig:dynamic-ablation}
\end{figure}
\endgroup

\subsection{Fixed-Slope Approximation}
\label{sec:taylor-center}
\label{sec:effective-slope}

We examine whether more accurate attention-mass restoration improves acceptance enough to justify the cost of state-dependent calibration.
Although \method{} fixes the shared slope at $s=5$, its Query scale still varies with visible-Key count through Equation~\ref{eq:fixed-slope-rule}.
The approximation avoids measuring current attention statistics and computes the scale from cardinality alone.

For a more accurate correction, Layerwise Taylor evaluates the current contrast and its derivative separately in each draft layer, re-centering the Taylor expansion near the previous applied scale.
This requires additional Query--Key score computation and top-32 partitioning in all five layers at each post-reference drafting step.
We compare this variant with Frozen and \method{} on AIME-26 using Qwen3-8B/DFlash with native vLLM.
Appendix~\ref{app:layerwise-taylor} gives the update rule and measurement details.

Layerwise Taylor restores attention mass more accurately and yields a modest MAL gain.
Relative to \method{}, Layerwise Taylor reduces mean absolute top-32 mass error from $6.163$ to $2.388$ percentage points and raises whole-request MAL from $4.545$ to $4.667$ ($2.7\%$), averaging requests equally (Figures~\ref{fig:layerwise-full-mass} and~\ref{fig:layerwise-full-mal}).
Most of the additional MAL gain comes from capped or repetitive outputs.
These results are consistent with a benefit from mitigating dilution, but do not isolate its causal effect because layerwise correction, the scale cap ($2$ rather than $e^{0.5}$), and generation paths also differ.

Mean per-request E2E time is $89.70$ seconds for Frozen, $59.24$ for \method{}, and $77.90$ for Layerwise Taylor, with Taylor taking $31.5\%$ longer than \method{}.
Because output lengths and GPU assignments differ in this full-cohort comparison, we also assess runtime on requests with identical output tokens and termination for \method{} and Layerwise Taylor, measuring each comparison on the same GPU.
On these matched requests, Layerwise Taylor increases mean E2E time from $25.87$ to $31.28$ seconds ($20.9\%$).
These timings include calibration during generation but exclude separate attention-mass diagnostics.
For this native-vLLM implementation, the extra calibration cost outweighs the acceptance gain, favoring the simpler fixed-slope rule.

We separately test the choice of $s=5$ by sweeping $s\in\{3,4,5,6,8\}$ on AIME-26, holding $N_0$, $k$, and $\beta_{\max}$ fixed.
The default $s=5$ gives the highest observed whole-request MAL ($4.545$), while $s=4$ and $s=6$ remain within $1.6\%$ (Figure~\ref{fig:slope-sensitivity}).
A separate pooled-reference diagnostic yields effective slopes between $4.995$ and $5.112$ (Appendix~\ref{app:fixed-slope-validation}).
Together, these results support $s=5$ as a practical default that retains cardinality-dependent scaling without the cost of online contrast estimation.

\begin{figure}[!htbp]
 \centering
 \includegraphics[width=0.96\linewidth]{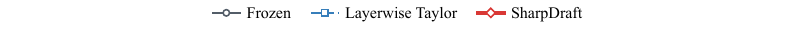}\par
 \vspace{-\parskip}\nointerlineskip
 \begin{subfigure}[t]{0.30\linewidth}
  \centering
  \includegraphics[width=\linewidth]{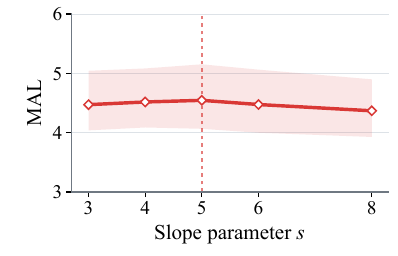}
  \caption{Slope sensitivity}
  \label{fig:slope-sensitivity}
 \end{subfigure}\hspace{0.04\linewidth}%
 \begin{subfigure}[t]{0.30\linewidth}
  \centering
  \includegraphics[width=\linewidth]{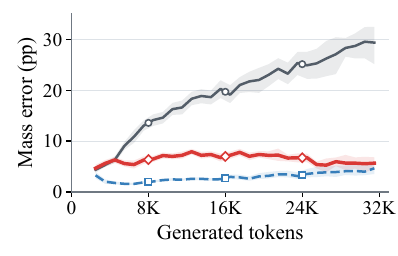}
  \caption{Top-32 mass error}
  \label{fig:layerwise-full-mass}
 \end{subfigure}\hspace{0.04\linewidth}%
 \begin{subfigure}[t]{0.30\linewidth}
  \centering
  \includegraphics[width=\linewidth]{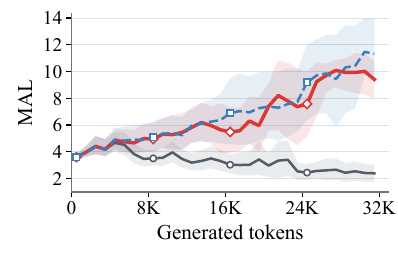}
  \caption{MAL}
  \label{fig:layerwise-full-mal}
 \end{subfigure}
 \caption{Fixed-slope sensitivity and layerwise calibration on Qwen3-8B/AIME-26. (a) MAL across slope settings. (b,c) Top-32 mass error and MAL. Shading shows pointwise 95\% bootstrap confidence intervals.}
 \label{fig:layerwise-full-profiles}
\end{figure}

\subsection{Sensitivity to the Reference Cardinality}
\label{sec:reference-sensitivity}

We vary the reference count $N_0$ to check whether the same setting remains useful across models and datasets.
Scaling begins when the visible-Key count exceeds $N_0$, so a larger reference shifts its onset to a longer context.
Figure~\ref{fig:mechanism} compares $N_0\in\{1\mathrm{K},2\mathrm{K},4\mathrm{K},8\mathrm{K}\}$ for four models on AIME-26 and three datasets for Qwen3-8B, using 30 inputs per model--dataset pair.
We report the reduction in total verifier calls per generated token relative to the default reference of 2K for Qwen/Llama and 4K for GLM.
Positive values indicate fewer calls per token, while zero denotes the default reference.

\begin{figure}[!htbp]
 \centering
 \begin{subfigure}[t]{0.31\linewidth}
  \centering
  \includegraphics[width=\linewidth]{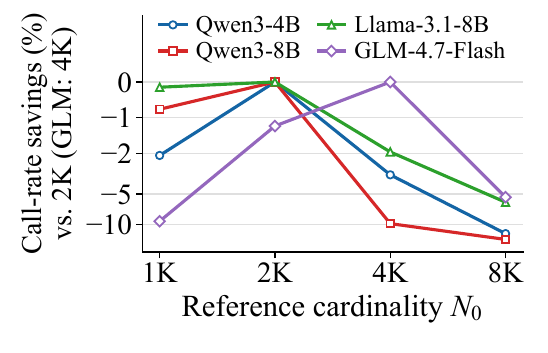}
  \caption{Across models}
  \label{fig:reference-models}
 \end{subfigure}\hspace{0.04\linewidth}%
 \begin{subfigure}[t]{0.31\linewidth}
  \centering
  \includegraphics[width=\linewidth]{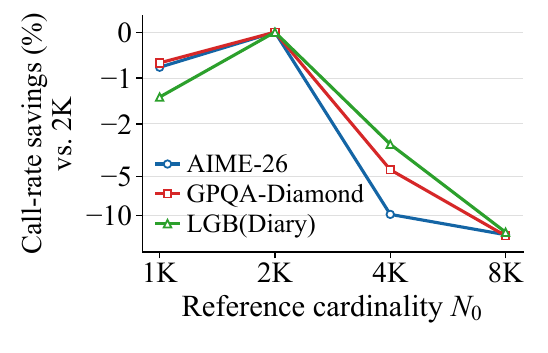}
  \caption{Across datasets}
  \label{fig:full-rollout-domain}
 \end{subfigure}
 \caption{Sensitivity to the reference cardinality across models and datasets. Verifier-call-rate savings are relative to 2K for Qwen/Llama and 4K for GLM.}
 \label{fig:mechanism}
\end{figure}

The effect of changing the reference differs across models.
On AIME-26, increasing $N_0$ from 2K to 4K raises calls per token by $3.2\%$, $9.8\%$, and $2.0\%$ for Qwen3-4B, Qwen3-8B, and Llama-3.1-8B, respectively.
GLM shows the opposite preference between these two settings, requiring $1.2\%$ more calls at 2K than at 4K.
For Qwen3-8B, the pattern is similar across AIME-26, GPQA-Diamond, and LGB(Diary).
The 1K setting stays close to 2K, while 8K requires approximately $13$--$14\%$ more calls per token.
These results support a shared reference across the tested Qwen3-8B workloads, but suggest checking it again when changing the model rather than treating 2K as universal.
Appendix~\ref{app:reference-sensitivity} provides the generation and aggregation details.
\flushbottom

\section{Discussion and Limitations}
\label{sec:limitations}

Query scaling changes how sharply the draft model attends, but cannot by itself correct every source of disagreement with the target.
For fixed Queries and Keys, positive scaling preserves score order and can reinforce an incorrect preference.
The layerwise experiment illustrates this distinction: restoring attention mass more closely adds little to MAL.
Learned updates may still be useful when the mismatch concerns content rather than concentration; combining selective online adaptation with Query scaling is a natural extension.

The theoretical guarantee is narrower than the empirical evaluation.
Our reference-inverse mass-preservation result assumes $C_t\ge C_{\mathrm{ref}}$, agreement between current and reference contrasts throughout the application range, and inactive upper clipping.
The deployed rule uses a Taylor approximation and a fixed slope, whose adequacy also depends on current/reference contrast offsets (Appendix~\ref{app:effective-slope}).
Its slope and reference count should therefore be validated for a new draft model, rather than assumed to recover a universal calibration rule.

Our end-to-end evidence is strongest for DFlash under greedy decoding; the transfer experiments show that the benefit can vary with the target--draft pair.
The multi-turn study extends acceptance measurements to 100K generated tokens, not latency measurements, and retains repetitive continuations.
Since repetitive text can be easy to draft, evaluating answer quality alongside latency is important before interpreting high long-context MAL as a practical gain.
Whether the latency gains persist under stochastic decoding, batching, or concurrent serving remains untested.

\section{Conclusion}
\label{sec:conclusion}

We presented \method{}, which scales draft Queries according to visible-Key cardinality without training the draft model.
Separating attention-score contrast from the number of competing Keys leads to a reference-based correction, implemented through a fixed-slope approximation derived from a Taylor expansion.
On the evaluated DFlash workloads, this correction yields higher end-to-end speedup than Frozen and online adaptation while matching Frozen's peak allocated memory.
The ablations show a modest but consistent benefit from varying the scale with context, while more precise mass restoration adds little further acceptance.

\clearpage %

\bibliography{iclr2027_conference}
\bibliographystyle{iclr2027_conference}
\appendix

\section{Theoretical Analysis}
\label{app:derivations}

\subsection{Exact Partition Identity}
For a current query define
\begin{equation}
 F_t(\beta)=
 \log\left(\frac{1}{k}\sum_{i\in S_t}e^{\beta z_{t,i}}\right)
 -\log\left(\frac{1}{C_t}\sum_{j\in B_t}e^{\beta z_{t,j}}\right).
 \label{eq:current-contrast}
\end{equation}
For $p_t(\beta)=\sum_{i\in S_t}a_{t,i}(\beta)$, direct algebra gives
\begin{equation}
 \logit p_t(\beta)=F_t(\beta)-\log\frac{C_t}{k}.
 \label{eq:partition-identity}
\end{equation}
At fixed $F_t(1)$, increasing $C_t$ decreases uncalibrated top-$k$ mass, without requiring the selected keys to be task-relevant.

\subsection{Reference Contrast and Monotonicity}
For numerical inversion, let $\mathcal C_{\mathrm{ref}}$ be a finite query set disjoint from evaluation, with each $F_x$ normalized by its own partition sizes as in Section~\ref{sec:formulation}. The frozen reference curve is
\begin{equation}
 F_{\mathrm{ref}}(\beta)=
 \mathbb{E}_{x\in\mathcal C_{\mathrm{ref}}}[F_x(\beta)].
 \label{eq:reference-contrast}
\end{equation}
For a specified reference competitor count $C_{\mathrm{ref}}=N_0-k>0$, define
\begin{equation}
 \logit\rho_{\mathrm{ref}}=
 F_{\mathrm{ref}}(1)-\log\frac{C_{\mathrm{ref}}}{k}.
 \label{eq:reference-mass}
\end{equation}
Only the reference-inverse rule requires the measured curve; the deployed fixed-slope rule does not.

Let $w_{x,i}^{S}(\beta)$ and $w_{x,j}^{B}(\beta)$ be softmax weights within each partition.
Differentiating gives
\begin{equation}
 F_{\mathrm{ref}}'(\beta)=
 \mathbb{E}_{x\in\mathcal{C}_{\mathrm{ref}}}\left[
 \sum_{i\in S_x}w_{x,i}^{S}(\beta)z_{x,i}
 -\sum_{j\in B_x}w_{x,j}^{B}(\beta)z_{x,j}\right].
 \label{eq:contrast-derivative}
\end{equation}
Under strict top-$k$ separation, every top-$k$ logit exceeds every background logit; the first weighted mean is therefore larger.
For a finite calibration set this establishes strict monotonicity of the average and the existence of an inverse on the selected interval.

\subsection{Reference Inverse and Conditional Mass Preservation}
\label{app:inverse-rule}
For $\beta_{\max}\ge1$, the complete reference-inverse rule is
\begin{align}
 r_t&=\max\left\{\frac{C_t}{C_{\mathrm{ref}}},1\right\},
 \label{eq:ratio}\\
 \beta_t&=F_{\mathrm{ref}}^{-1}\!\left(
 \min\left\{F_{\mathrm{ref}}(1)+\log r_t,\,
 F_{\mathrm{ref}}(\beta_{\max})\right\}\right).
 \label{eq:inverse-rule}
\end{align}
The inverse is evaluated numerically on $F_{\mathrm{ref}}([1,\beta_{\max}])$ using a frozen curve or lookup table. Apply $q_t\leftarrow\beta_t q_t$; the clamps give unit scale below $C_{\mathrm{ref}}$ and cap it at $\beta_{\max}$. The conditional guarantee below applies when neither clamp changes the matching solution and the stated contrast assumptions hold.

\begin{proposition}[Conditional reference-mass preservation]
\label{prop:mass}
Assume $C_t\ge C_{\mathrm{ref}}$, inactive upper clipping, and $F_t(\beta)=F_{\mathrm{ref}}(\beta)$ throughout the application range. The reference inverse in \eqref{eq:inverse-rule} then gives $p_t(\beta_t)=\rho_{\mathrm{ref}}$. If instead $|F_t(\beta_t)-F_{\mathrm{ref}}(\beta_t)|\le\varepsilon$, the same count and clipping conditions give
\begin{equation}
 |p_t(\beta_t)-\rho_{\mathrm{ref}}|\le\varepsilon/4.
 \label{eq:mass-error}
\end{equation}
\end{proposition}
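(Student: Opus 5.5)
The plan is to combine the exact partition identity \eqref{eq:partition-identity} with the defining equation of the reference inverse. The final step uses a Lipschitz bound for the logistic function.

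\emph{Step 1 (the matching equation).} Assume $C_t\ge C_{\mathrm{ref}}$, so $r_t=C_t/C_{\mathrm{ref}}$. Assume also that upper clipping is inactive, so the minimum in \eqref{eq:inverse-rule} selects $F_{\mathrm{ref}}(1)+\log r_t$. The strict monotonicity of $F_{\mathrm{ref}}$ established from \eqref{eq:contrast-derivative} makes $F_{\mathrm{ref}}^{-1}$ well defined on $F_{\mathrm{ref}}([1,\beta_{\max}])$. Hence $\beta_t\in[1,\beta_{\max}]$ satisfies
\[
F_{\mathrm{ref}}(\beta_t)=F_{\mathrm{ref}}(1)+\log\frac{C_t}{C_{\mathrm{ref}}},
\]
which is exactly \eqref{eq:contrast-correction}.

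\emph{Step 2 (exact case).} By \eqref{eq:partition-identity},
\[
\logit p_t(\beta_t)=F_t(\beta_t)-\log\frac{C_t}{k}.
\]
Substitute $F_t(\beta_t)=F_{\mathrm{ref}}(\beta_t)$, which holds because $\beta_t$ lies in the application range, and then use Step 1:
\[
\logit p_t(\beta_t)=F_{\mathrm{ref}}(1)+\log\frac{C_t}{C_{\mathrm{ref}}}-\log\frac{C_t}{k}=F_{\mathrm{ref}}(1)-\log\frac{C_{\mathrm{ref}}}{k},
\]
which equals $\logit\rho_{\mathrm{ref}}$ by \eqref{eq:reference-mass}. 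Since $\logit$ is a bijection from $(0,1)$ onto $\mathbb{R}$, we get $p_t(\beta_t)=\rho_{\mathrm{ref}}$. Both $p_t(\beta_t)$ and $\rho_{\mathrm{ref}}$ lie strictly in $(0,1)$ because all softmax weights are positive and $N_t>k$.

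\emph{Step 3 (perturbed case).} The same algebra without the equality assumption gives
\[
\logit p_t(\beta_t)-\logit\rho_{\mathrm{ref}}=F_t(\beta_t)-F_{\mathrm{ref}}(\beta_t),
\]
whose absolute value is at most $\varepsilon$. Write $p_t(\beta_t)=\sigma(u)$ and $\rho_{\mathrm{ref}}=\sigma(v)$, where $\sigma(x)=1/(1+e^{-x})$ is the logistic function. Since $\sigma'(x)=\sigma(x)(1-\sigma(x))\le 1/4$, the mean value theorem gives $|\sigma(u)-\sigma(v)|\le |u-v|/4\le\varepsilon/4$, which is \eqref{eq:mass-error}.

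The only delicate point is Step 1: the minimum and the inverse must be well defined, and $\beta_t$ must lie in the range where the contrast hypotheses are assumed. Monotonicity comes from the strict top-$k$ separation argument stated after \eqref{eq:contrast-derivative}. For this to hold, each reference query needs more than one distinct score across its partitions, so that the weighted-mean gap is positive. I would state this explicitly as a nondegeneracy condition. The remaining steps are direct algebra.
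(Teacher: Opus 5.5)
Your proof is correct and follows the paper's argument essentially step for step. With $C_t\ge C_{\mathrm{ref}}$ and inactive clipping, the inverse rule gives the matching equation; the partition identity plus contrast equality then gives $\logit p_t(\beta_t)=\logit\rho_{\mathrm{ref}}$; and the $1/4$-Lipschitz logistic bound turns the log-odds error $|F_t(\beta_t)-F_{\mathrm{ref}}(\beta_t)|\le\varepsilon$ into the mass bound. Your extra nondegeneracy caveat is harmless but unnecessary, since strict top-$k$ separation already forces the top-partition weighted mean to exceed the background one.
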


\begin{proof}
Since $C_t\ge C_{\mathrm{ref}}$ and upper clipping is inactive, \eqref{eq:inverse-rule} gives $F_{\mathrm{ref}}(\beta_t)=F_{\mathrm{ref}}(1)+ \log(C_t/C_{\mathrm{ref}})$.
Using contrast equality and \eqref{eq:partition-identity},
\begin{align*}
 \logit p_t
 &=F_t(\beta_t)-\log(C_t/k)\\
 &=F_{\mathrm{ref}}(1)+\log(C_t/C_{\mathrm{ref}})-\log(C_t/k)\\
 &=F_{\mathrm{ref}}(1)-\log(C_{\mathrm{ref}}/k)
 =\logit\rho_{\mathrm{ref}}.
\end{align*}
Thus $p_t=\rho_{\mathrm{ref}}$.
With contrast error at most $\varepsilon$, log-odds error is at most $\varepsilon$.
The logistic function is $1/4$-Lipschitz, yielding \eqref{eq:mass-error}.
\end{proof}

\subsection{Training-Distribution Reference and Approximation Scope}
\label{app:approximation-scope}
When the training-query distribution is available, define $F_{\mathrm{train}}(\beta)=\mathbb E_{x\sim\mathcal D_{\mathrm{train}}}[F_x(\beta)]$ and the effective competitor count
\begin{equation}
 C_{\mathrm{train}}=\exp\!\left(
 \mathbb E_{x\sim\mathcal D_{\mathrm{train}}}[\log(N_x-k)]\right).
 \label{eq:training-reference}
\end{equation}
If the same training distribution defines $F_{\mathrm{train}}$ and $C_{\mathrm{train}}$, then
\begin{equation}
 \mathbb{E}_{x\sim\mathcal{D}_{\mathrm{train}}}[\logit p_x(1)]
 =F_{\mathrm{train}}(1)-\log\frac{C_{\mathrm{train}}}{k}.
 \label{eq:mean-log-odds}
\end{equation}
The effective total-Key reference $N_{\mathrm{train}}=C_{\mathrm{train}}+k$ generally differs from $\exp(\mathbb{E}[\log N_x])$.
The exact ratio $(N_t-k)/C_{\mathrm{train}}$ is approximated by total-length ratios only when $N$ is large relative to $k$.

This reference represents mean log-odds rather than mean attention mass. Without the training distribution, $N_0$ is a fixed experimental proxy (Appendix~\ref{app:reference-selection}), not a measurement of training-time attention.

The first-order Taylor expansion around $\beta=1$ underlying the fixed-slope approximation is
\begin{equation}
 F_{\mathrm{ref}}(\beta)\approx F_{\mathrm{ref}}(1)
 +\Delta_{\mathrm{ref}}(\beta-1),
 \qquad\Delta_{\mathrm{ref}}=F'_{\mathrm{ref}}(1)>0,
 \label{eq:linearization}
\end{equation}
which would give
\begin{equation}
 \beta_t\approx\min\left\{\beta_{\max},
 1+\frac{\log r_t}{\Delta_{\mathrm{ref}}}\right\}.
 \label{eq:linear-rule}
\end{equation}
Equation~\ref{eq:fixed-slope-rule} replaces $\Delta_{\mathrm{ref}}$ with the fixed slope $s$.
Its error depends on slope choice, the Taylor remainder, and current/reference contrast mismatch. The fixed-slope rule inherits the guarantee of Proposition~\ref{prop:mass} when its assumptions hold and the reference contrast is linear with slope $s$.

\section{Implementation and Evaluation Protocols}
\label{app:reproducibility}

\subsection{Query-Scaling Implementation}
\label{app:reference-selection}

The DFlash experiments use $k=32$, $s=5$, and $\beta_{\max}=e^{0.5}$, with $N_0=2048$ for primary requests and Qwen/Llama multi-turn sessions and $N_0=4096$ for GLM.
These are fixed experimental proxies, not estimates of the training-time visible-Key distribution.
GLM's setting is informed by a separate held-out MATH-500 replay comparison; it is not selected on the reference-sensitivity evaluation inputs.

\paragraph{Finite-precision application.}
\label{app:query-rescaling}
For the BF16 Q-normalization output $y$, the primary implementation applies
\begin{equation}
 y'=y+\operatorname{cast}_{\mathrm{BF16}}
       \bigl(\operatorname{float}(y)(\widehat{\beta}_t-1)\bigr).
 \label{eq:precision}
\end{equation}
The gain delta and product are computed in FP32, and the product is cast to BF16 before addition; normalization weights remain unchanged.

\paragraph{Cost accounting.}
\label{app:runtime-overhead}
In full attention, evaluating the scalar requires only context length and block size; rescaling is proportional to the current query tensor size.
Its arithmetic and temporary allocations are included in E2E time and peak memory, without a training copy, gradients, or optimizer state.

\subsection{End-to-End Evaluation}
\label{app:evaluation-metrics}

Table~\ref{tab:main-results} uses matched DFlash draft models with PyTorch SDPA on one NVIDIA RTX 6000 Ada GPU per run.
The cohort comprises 30 AIME-26 questions, all 198 GPQA-Diamond questions, and 100 LongGenBench Daily Diary prompts per model.
GPQA answer choices are permuted with seed zero in source order.
Speculative arms undergo a first-question token and finish-reason parity check against Frozen before independent generation of subsequent requests.
Methods share prompts, target--draft precision, and verification policy.
AR uses the same target checkpoint and precision without a draft model.

\paragraph{Online adaptation.}
Full Adaptation uses Adam with learning rate $5\times10^{-5}$ and momentum coefficients $(0.9,0.999)$.
LoRA uses AdamW with learning rate $10^{-3}$, momentum coefficients $(0.9,0.95)$, and zero weight decay.
Both use an update stride of five verification rounds.

\paragraph{Metrics.}
MAL is emitted tokens, including correction or bonus tokens, divided by target verification rounds.
E2E includes prefill, drafting, verification, calibration or learning, and synchronization.
Speedup is total reference E2E divided by total method E2E, with a geometric mean across benchmarks.
Tables~\ref{tab:main-results} and~\ref{tab:pard-results} both define E2E speedup relative to target-only AR on the same model, prompts, and hardware.
Peak memory is the maximum GPU allocation across the 228 AIME-26 and GPQA-Diamond requests.
Update cost is training-stream CUDA time divided by completed optimizer updates on these requests; zero denotes no optimizer updates.

\paragraph{Daily Diary prompts and stopping.}
\label{app:diary-protocol}
We use prompt ordinals 100--199 from the official long split, each requesting 365 entries.
Every request starts from fresh state with the raw prompt and BOS when defined, without a chat template or added reasoning instruction.
Greedy decoding stops at EOS, the first decoded \texttt{*** finished}, 32,000 new tokens, or the 40,960-token context limit.
The Llama \method{} run instead used a 32,768-token output cap, with seven outputs exceeding 32,000.
All outputs are retained, without repetition guards.
Runs use BF16, PyTorch 2.11.0+cu128, Transformers 5.5.4, and SDPA with flash, memory-efficient, and math backends enabled and cuDNN disabled.

\paragraph{Request-time visualization.}
\label{app:request-time-breakdown}
Figure~\ref{fig:request-time-breakdown} averages AIME-26 E2E times and per-request training-stream CUDA durations.
Training may overlap decoding: the right-aligned highlights neither locate execution intervals nor form an additive critical-path decomposition.
Drafting, verification, and calibration are not timed separately, and the unhighlighted portion is not a measured decoding component.

\subsection{Transfer Across Draft Model Families}
\label{app:cross-drafter}
Table~\ref{tab:pard-results} evaluates Qwen3-8B on H200 using 30 AIME-26 questions, 73 GPQA-Diamond questions, and 100 Daily Diary prompts.
For EAGLE 3.1, we use the Attention Drift authors' released checkpoint \texttt{Dogacel/paper-attention-drift-qwen3-post} at revision \texttt{6e6ee69bf2e72e0e0654425bdca06d7b097e52a9}~\citep{attentiondrift}.
Its configuration enables FC normalization and post-norm hidden-state feedback (\texttt{fc\_norm=true}, \texttt{norm\_output=true}).
PARD uses twelve proposals and $N_0=8192$; EAGLE 3.1 uses eight proposals and $N_0=2048$.
Both use $k=32$, $s=5$, and a log-gain cap of $0.5$.
The LoRA controls schedule updates every five verification rounds; the EAGLE 3.1 run starts at round five and permits up to 8,192 updates.

\subsection{Long-Generation Motivation}
\label{app:figure2-diagnostics}

Figure~\ref{fig:cardinality-motivation} uses Qwen3-8B on the first LongGenBench Diary prompt in our evaluation (source ordinal 100), separately from the aggregate latency evaluation.
The throughput measurements use an RTX 6000 Ada GPU, BF16 greedy decoding, and a 40,000-token output cap.
DFlash, PARD, and EAGLE 3.1 use frozen draft models; EAGLE 3.1 uses top-1 drafting.

\paragraph{Throughput.}
Panel (a) averages three fresh-process runs per method, including a separately measured AR baseline.
Decode rates use 512-token elapsed-time windows at a stride of 64 tokens, with timestamps interpolated within each run; prefill is excluded.
For display, each mean-rate curve, including AR, is smoothed over a centered 1,024-token span.
The linear vertical axis shows absolute decode throughput in tokens per second.
Methods may generate different output sequences, so positions represent generated-token counts rather than matched text prefixes.

\paragraph{Attention concentration.}
Panel (b) replays each draft model along its saved token sequence and samples top-32 attention mass every 64 generated tokens, averaging over draft layers, heads, and proposal queries.
Lines show centered 2,048-token local means; shaded bands show $\pm2$ population standard deviations of the sampled values within the same window.
Windows use only observed positions at the boundaries, without padding or extrapolation.
The bands describe local variation across token positions, not confidence intervals or uncertainty across repeated runs.
The draft architectures have different query scopes and visible-Key sets, so cross-method mass levels are descriptive.

\subsection{Multi-Turn Acceptance}
\label{app:multiturn-mal}

Figure~\ref{fig:multiturn-mal-profiles} uses a 223-question evaluation subset of MATH-500~\citep{lightman2023verify}, with thirty order seeds ($2026092600$--$2026092629$) shared across methods within each model.
The source is the \texttt{test.jsonl} file from \href{https://huggingface.co/datasets/HuggingFaceH4/MATH-500/tree/6e4ed1a2a79af7d8630a6b768ec859cb5af4d3be}{\texttt{HuggingFaceH4/MATH-500}} at revision \texttt{6e4ed1a2a79af7d8630a6b768ec859cb5af4d3be}.
Of its 500 questions, 262 have difficulty level 4 or 5 (128 and 134, respectively).
For each original row, \texttt{source\_id} is the literal prefix \texttt{math500:} followed by its \texttt{unique\_id}; if that field is absent, we append the decimal zero-based row index in the original file, before filtering or shuffling.
For example, \texttt{test/precalculus/881.json} becomes \texttt{math500:test/precalculus/881.json}.
We hash the UTF-8 encoding of \texttt{source\_id} with SHA-256, interpret the full hexadecimal digest as a base-16 integer $h$, and retain the question if $h\bmod 5\ne 0$.
This selects 223 evaluation questions and excludes the other 39.
For each seed, the full 262-question pool is shuffled before applying this split filter, and questions are not reused within a session.
EOS or 5,000 new tokens advances to the next question while retaining caches and adaptation state; state resets between sessions.
Repetitive sessions are retained, so MAL measures acceptance on the observed continuations rather than answer quality.

\paragraph{GLM checkpoints.}
The GLM target is \href{https://huggingface.co/zai-org/GLM-4.7-Flash/tree/7dd20894a642a0aa287e9827cb1a1f7f91386b67}{\texttt{zai-org/GLM-4.7-Flash}} at revision \texttt{7dd20894a642a0aa287e9827cb1a1f7f91386b67}.
Its DFlash draft is \href{https://huggingface.co/Dariusfar/GLM-4.7-Flash-DFlash/tree/d08993a8ec6fc0b50eb539bafa9f77a0de9a60ac}{\texttt{Dariusfar/GLM-4.7-Flash-DFlash}} at revision \texttt{d08993a8ec6fc0b50eb539bafa9f77a0de9a60ac}.
All thirty seeds and four methods initialize from this target--draft pair and use the same pinned MATH-500 source and evaluation split.

\paragraph{Generation and aggregation.}
Runs use H200, vLLM, BF16, and greedy decoding.
Qwen enables thinking with 16-token draft blocks; Llama uses its default template with 10-token blocks; GLM disables thinking with 16-token blocks.
Qwen/Llama use a 40,960-token context limit and output budgets of 40,000 tokens for ten seeds and 32,000 for twenty; panels (a)--(c) use the common first 32,000 tokens.
GLM uses a 100,000-token output budget and 131,072-token context limit.
Budgets exclude input tokens; context exhaustion may stop a session early.

Complete 2,048-token windows advance every 128 tokens and are plotted at their midpoints.
Rounds are assigned by their starting generated-token positions; each session contributes emitted tokens divided by verification rounds, and session ratios receive equal weight.
Incomplete windows are omitted without padding.
Bands use 20,000 paired seed-bootstrap resamples for pointwise 95\% percentile intervals.
Overlapping windows are correlated; late estimates use only contributing sessions.

\section{Analysis Protocols}
\label{app:analysis-protocols}

\subsection{Fixed vs.\ Dynamic Scaling}
\label{app:fixed-dynamic-live}

Table~\ref{tab:fixed-dynamic-length} uses Qwen3-8B/DFlash-b16 on all 30 AIME-26 questions with H200, BF16, PyTorch 2.11.0+cu130, and Transformers 5.5.4.
SDPA enables flash, memory-efficient, and math backends; cuDNN and TF32 are disabled.
Thinking is enabled, with greedy decoding to EOS or 32,768 new tokens and no repetition filtering.
Frozen, five fixed scales $\beta=e^{0.1j}$ ($j=1,\ldots,5$), and \method{} share the primary implementation and Equation~\ref{eq:precision}.
Each problem runs all seven conditions on the same GPU, with Frozen first and the remaining order rotated across problems.
Inclusion requires exact agreement in complete output tokens and termination across all conditions.
All \AblationPaired{} problems pass, with 26 EOS-terminated and four capped outputs per condition.

\paragraph{Aggregation and length groups.}
For $P$ paired problems, let $T_i$ be emitted tokens after the initial prefill token and $V_i$ the verification-round count.
The table averages whole-request ratios,
\begin{equation}
 \mathrm{MAL}_{\mathrm{overall}}=\frac{1}{P}\sum_{i=1}^{P}\frac{T_i}{V_i},
 \label{eq:ablation-overall-mal}
\end{equation}
rather than the pooled-round ratio $\sum_i T_i/\sum_i V_i$.
Final output length $L_i=T_i+1$ assigns each problem to exactly one group: $[0,10240)$, $[10240,20480)$, or $[20480,32768]$, including capped outputs in the last group.
The groups contain \AblationShortN{}, \AblationMediumN{}, and \AblationLongN{} problems, respectively.
Each group averages the same whole-request ratios; Overall is their problem-count-weighted mean.
Global and group-specific best fixed scales are selected on this cohort, not on held-out data.
These H200 acceptance measurements are separate from the RTX 6000 Ada latency results.

\subsection{Attention on Matched Prefixes}
\label{app:attention-sweep}

Figure~\ref{fig:dynamic-ablation-mass} probes the Frozen histories from Appendix~\ref{app:fixed-dynamic-live}, rebuilding target features in 512-token chunks.
At nominal $N\in\{2048,4096,8192,16384,32768\}$, it evaluates visible-Key counts $N-192,N-128,N-64,N$.
All seven settings share the prefix, target features, anchor token, and 15 masked proposal slots, without a persistent draft cache.
The scale uses the actual visible-Key count, including the 16-token draft block.

Attention logits and log-sum-exp reductions are reconstructed in FP32 from BF16 Queries and Keys.
Top-32 mass is averaged equally over five layers, 32 heads, and 15 proposal positions, then over four probes per problem and finally over eligible problems.
A problem contributes only when all four probes exist in every condition; terminated histories are not extended.
The five windows contain \AimeAttentionSupport{} problems.
Pointwise 95\% percentile intervals use 20,000 problem-bootstrap resamples paired across settings; the figure displays means only.
Instrumentation preserves draft hidden states bitwise at each problem's first and last probes.

\subsection{Cumulative Acceptance}
\label{app:cumulative-mal}

Figure~\ref{fig:dynamic-ablation-cumulative-mal} uses the live verification rounds from the same paired cohort at checkpoints $x\in\{1024,2048,\ldots,32768\}$.
Let $S(x)=\{i:L_i\geq x\}$ be the eligible problems, $\ell_{imr}$ the tokens emitted by method $m$ in round $r$, and $R_{im}(x)$ the number of complete rounds ending by $x$.
The equal-problem mean is
\[
 \overline{\mathrm{MAL}}_m(x)=\frac{1}{|S(x)|}\sum_{i\in S(x)}
 \frac{\sum_{r=1}^{R_{im}(x)}\ell_{imr}}{R_{im}(x)}.
\]
Emitted tokens include correction or bonus tokens but exclude the initial prefill token.
Rounds are not split or interpolated; the final included round may end up to 15 tokens before the checkpoint.
Long and capped outputs contribute at every earlier checkpoint, without extending terminated outputs.

The plot reports $100[\overline{\mathrm{MAL}}_m(x)/\overline{\mathrm{MAL}}_{\mathrm{dynamic}}(x)-1]$: a ratio of means over identical problems, not a mean of per-problem percentages.
The y-axis is linear within $\pm2$ percentage points and logarithmic outside.
The 1K/8K/16K/24K/32K checkpoints contain \AimeCumulativeSupport{} problems.
Pointwise intervals for absolute MAL use 20,000 paired problem-bootstrap resamples and are retained in the data; the figure shows relative means.
Evaluating each problem at its own endpoint recovers Table~\ref{tab:fixed-dynamic-length}; the 32K checkpoint instead uses only the \AimeCumulativeLateN{} problems reaching it.

\subsection{Fixed-Slope Sensitivity and Reference Validation}
\label{app:fixed-slope-validation}

\paragraph{Slope sweep.}
Figure~\ref{fig:slope-sensitivity} compares $s\in\{3,4,5,6,8\}$ on all 30 AIME-26 problems with Qwen3-8B/DFlash-b16, using native vLLM 0.27.1 in BF16 on H200.
Only $s$ changes, with $N_0=2048$, $k=32$, and $\beta_{\max}=e^{0.5}$ held fixed.
Each setting uses greedy decoding with thinking enabled and a 32,768-token output cap, stopping at EOS but not on repetition.
All 150 problem--setting pairs are complete, with one run per pair and no exclusions based on termination or repetition.
The line averages whole-request MAL equally across problems, and the band shows pointwise 95\% percentile bootstrap confidence intervals of this mean.
We use 20,000 resamples of the 30 problems (seed 20260926), sharing the sampled problem indices across slopes to preserve pairing.
These intervals quantify uncertainty across problems, not repeated-run variation or simultaneous coverage across slopes.
MAL excludes the initial prefill token and divides the remaining emitted tokens by verification calls.
Nine problems have identical outputs across all five settings, so this is a full-generation comparison rather than an output-matched comparison like Table~\ref{tab:fixed-dynamic-length}.
The mean fraction of post-reference verification rounds at the upper scale cap is $23.9\%$ for $s=3$, $2.8\%$ for $s=4$, and zero for $s\in\{5,6,8\}$.
Thus, clipping affects the smaller slopes and should not be mistaken for slope insensitivity.

\paragraph{Reference-curve diagnostic.}
To check whether a slope of five approximates the reference correction, we numerically invert a measured $F_{\mathrm{ref}}$ rather than using the deployed scaling rule.
The curve pools 19,200 Query rows from two Qwen3-8B/DFlash reference trajectories at $N_0=2048$, evaluated at 513 scales between $1$ and $e^{0.5}$.
For each $N>N_0$, we solve Equation~\ref{eq:contrast-correction} by interpolation to obtain $\beta^*(N)$, then compute the effective reference slope
\begin{equation}
 s_{\mathrm{eff}}^{\mathrm{ref}}(N)
 =\frac{\log[(N-k)/(N_0-k)]}{\beta^*(N)-1}
 =\frac{F_{\mathrm{ref}}(\beta^*(N))-F_{\mathrm{ref}}(1)}{\beta^*(N)-1}.
 \label{eq:effective-reference-slope}
\end{equation}
This is a secant slope, distinct from the local derivative $\Delta_{\mathrm{ref}}=F'_{\mathrm{ref}}(1)$.
The diagnostic uses 256-Key increments through 40K, excluding $N=N_0$ where the ratio is undefined.
All numerical solutions lie inside the measured scale range, without clipping or extrapolation.

Figure~\ref{fig:reference-effective-slope} shows that the inferred slope stays between $4.995$ and $5.112$ over this grid.
Thus, $s=5$ closely approximates the correction on this pooled reference curve.
This is not a distribution of live-state slopes or evidence of MAL optimality, since current-state contrast can differ from the reference.
The state-dependent offset is examined in Appendix~\ref{app:effective-slope}.

\begin{figure}[H]
 \centering
 \includegraphics[width=0.52\linewidth]{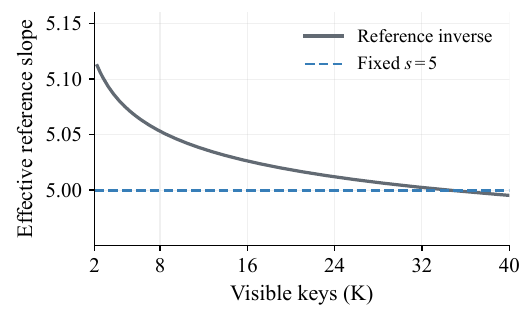}
 \caption{Effective slope inferred from the measured reference contrast for Qwen3-8B/DFlash. The dashed line marks the deployed setting $s=5$.}
 \label{fig:reference-effective-slope}
\end{figure}

\subsection{Layerwise Taylor Diagnostics}
\label{app:layerwise-taylor}

Figures~\ref{fig:layerwise-full-mass} and~\ref{fig:layerwise-full-mal} compare Frozen, the deployed fixed-slope rule, and Layerwise Taylor using Qwen3-8B/DFlash-b16 on H200 with native vLLM 0.27.1 and BF16.
Each of the 30 AIME-26 requests is run once per method, with greedy decoding, thinking enabled, and a 32,768-token output cap.
The analysis retains all requests, including capped and repetitive outputs.
Frozen, \method{}, and Layerwise Taylor have five, four, and five capped outputs, respectively, with detected tail cycles in three, three, and four requests.
Six requests have identical complete outputs across all three methods; the plotted profiles follow each method's full generation.

\paragraph{Layerwise expansion and diagnostic measurements.}
Layerwise Taylor uses the previous applied scale rounded to the nearest $0.1$ as its expansion center,
\begin{equation}
 \widehat\beta_{\ell,t}=\mathrm{clip}_{[1,2]}\!\left[
 c_{\ell,t}+\frac{F_{\mathrm{ref},\ell}(1)+\log r_t-F_{\ell,t}(c_{\ell,t})}
 {F'_{\ell,t}(c_{\ell,t})}\right],
 \qquad c_{\ell,t}=\operatorname{round}_{0.1}(\beta_{\ell,t-1}).
 \label{eq:layer-taylor-ablation}
\end{equation}
Here $F_{\ell,t}$ averages the contrast within layer $\ell$, and $r_t$ is the cardinality ratio from Equation~\ref{eq:fixed-slope-rule}.
The layerwise contrast averages 32 heads and 15 proposal Queries in each of five draft layers.
The reference contrasts and arithmetic-mean top-32 masses reuse two fixed SDPA calibration trajectories without refitting to vLLM.
Both scaling methods use $N_0=2048$ and $k=32$; the deployed scale is capped at $e^{0.5}$, while Layerwise Taylor is capped at $2$ and remains one below $N_0$.
Across the 4,450 post-reference layer--position diagnostic samples, the largest Layerwise Taylor scale is $1.916$ and 183 samples exceed $e^{0.5}$.
Each layer uses the hidden state produced by preceding scaled layers.
A separate diagnostic generation samples attention every 512 context tokens and reproduces the clean run's output tokens, verification rounds, and scales.
Taylor errors at center one and at the adaptive center are evaluated on the same Taylor states and applied scales, including finite-precision Query scaling and positional transformations.
Each expansion uses its measured local derivative, so this comparison isolates re-centering rather than the use of the fixed slope $s=5$.
Holding the state and applied scale fixed on the Taylor trajectories, mean absolute contrast error falls from $0.0433$ at center one to $0.0023$ with re-centering.
These diagnostics measure the centers visited during generation, not an independent sweep over fixed centers.

\paragraph{Output-matched timing comparison.}
The timing comparison in Section~\ref{sec:taylor-center} uses seven requests with identical output tokens and termination between \method{} and Layerwise Taylor, measured on the same physical GPU: 2026-I-01, 2026-I-04, 2026-I-05, 2026-I-07, 2026-I-09, 2026-II-01, and 2026-II-05.
Matching is required between these two methods, not against Frozen.
Six requests terminate naturally, while 2026-I-09 reaches the 32,768-token cap under both methods.
The additional output-matched request, 2026-II-12, uses different GPUs and is excluded only from this timing comparison.
Summed clean-run E2E times are $181.08$ seconds for \method{} and $218.95$ seconds for Layerwise Taylor, giving per-request means of $25.87$ and $31.28$ seconds ($20.9\%$ longer for Taylor).
The timing window includes prefill, decoding, calibration, and streaming delivery, but excludes model loading, reset, final trace serialization, and the separate diagnostic generations.
These measurements describe the current native-vLLM implementation rather than an isolated kernel-overhead benchmark or a repeated-run latency estimate.

\paragraph{Aggregation.}
Mass error is the absolute percentage-point deviation from each layer's reference top-32 mass, using samples with $N_t\ge2048$.
MAL includes all verification rounds.
Whole-request summaries weight all 30 requests equally.
For the figure, observations are first averaged within each request's 1,024-token generated-position bins, then across contributing requests.
Bands show pointwise 95\% percentile bootstrap confidence intervals of the mean, using 20,000 whole-request resamples (seed 20260926) shared across methods and positions.
Each resample averages only the requests observed in that bin; bins with one request have no band.
Terminated generations are not extended, so both the number and composition of contributing requests can change with position.
Points are placed at bin centers: the final point at 31.5K includes observations from the 31--32K bin, with five requests for Frozen, four for the deployed rule, and five for Layerwise Taylor.
Matching a layer's mean contrast does not require zero arithmetic-mean mass error, because the contrast and probability averages are different objectives.

\paragraph{Local slope and reference offset.}
\label{app:effective-slope}
At the single-head level, reference-mass matching requires $F_t(\beta_t)=F_{\mathrm{ref}}(1)+L_t$, where $L_t=\log r_t$.
Writing $A_t=F_{\mathrm{ref}}(1)-F_t(1)$ and $\Delta_t=F'_t(1)$, a first-order Taylor expansion gives $\beta_t-1\approx(A_t+L_t)/\Delta_t$.
Representing this correction as $L_t/\Delta_{\mathrm{eff},t}$ would require
\begin{equation}
 \Delta_{\mathrm{eff},t}=\frac{\Delta_t L_t}{A_t+L_t}.
 \label{eq:effective-slope-diagnostic}
\end{equation}
The expression is undefined when $A_t+L_t=0$ and can be nonpositive.
In the 30-request diagnostic, median layer-mean derivatives $F'_{\ell,t}(c_{\ell,t})$ on the Taylor trajectories are $4.84$, $4.39$, $5.26$, $4.72$, and $4.64$.
Effective denominators computed on Frozen states instead have medians $3.68$, $6.25$, $11.30$, $9.83$, and $7.69$.
These are measurements on different trajectories; they illustrate that a local derivative near five does not account for the current/reference offset or justify a universal effective slope.

\subsection{Reference-Cardinality Sensitivity}
\label{app:reference-sensitivity}

Figure~\ref{fig:mechanism}(a) uses all 30 AIME-26 questions for Qwen3-4B, Qwen3-8B, Llama-3.1-8B, and GLM-4.7-Flash.
Panel (b) reuses Qwen3-8B's AIME runs and adds the first 30 requests in the ordered GPQA73 selection and Diary source rows 100--129.
Each of the six model--dataset groups uses Frozen and $N_0\in\{1024,2048,4096,8192\}$ on identical inputs, retaining capped and repetitive outputs.

Runs use H200, vLLM 0.27.1, BF16, and greedy decoding; each setting generates independently from fresh single-turn state, stopping at EOS or 32,768 new tokens. Diary also stops at \texttt{*** finished}.
All Diary requests reach the cap in every arm.
The engine uses tensor parallelism one, one active request, eager execution, and no prefix caching or asynchronous scheduling.
Qwen reasoning prompts enable thinking, GLM disables it, and Llama uses its default template; Diary uses the raw primary prompt.
Draft blocks contain 16 tokens for Qwen/GLM and 10 for Llama.
The implementation counts persistent context plus 16 for every model, retaining $k=32$, $s=5$, and $\beta_{\max}=e^{0.5}$.

For setting $j$, let $V_{i,j}$ and $T_{i,j}$ be verification calls and generated tokens after prefill.
The pooled call rate and savings relative to reference $b$ are
\begin{equation}
 R_j=\frac{\sum_i V_{i,j}}{\sum_i T_{i,j}},
 \qquad
 S(j;b)=100\left(1-\frac{R_j}{R_b}\right).
 \label{eq:reference-call-rate}
\end{equation}
Reference $b$ is fixed at 2K for Qwen/Llama and 4K for GLM, rather than selected from the sweep.
Pointwise 95\% percentile intervals use 20,000 paired input-case bootstrap resamples, recomputing both totals in each draw; the figure displays point estimates.
The y-axis is linear within $\pm2$ percentage points and logarithmic outside, without omitting any range.

\end{document}